\newtheorem{theorem}{Theorem}[section]
\newtheorem{lemma}[theorem]{Lemma}
\newtheorem{corollary}[theorem]{Corollary}
\newtheorem{assumption}[theorem]{Assumption}
\newtheorem{remark}[theorem]{Remark}
\newtheorem{example}{Example}
\let\NAT@parse\undefined
\title{\LARGE \bf
On the Benefits of Leveraging Structural Information in Planning Over the Learned Model
}
\author{Jiajun Shen$^{1}$, Kananart Kuwaranancharoen$^{1}$, Raid Ayoub$^{2}$, Pietro Mercati$^{2}$ and Shreyas Sundaram$^{1}$
\thanks{*This work was supported by a grant from Intel Corporation.}
\thanks{$^{1}$Jiajun Shen, Kananart Kuwaranancharoen and Shreyas Sundaram are with the Elmore Family School of Electrical Computer Engineering, Purdue University, West Lafayette, IL 47906, USA
        {\tt\small shen590, kkuwaran, sundara2@purdue.edu}}%
\thanks{$^{2}$Raid Ayoub and Pietro Mercati are with Intel Corporation, 2111 NE 25th Ave., Hillsboro, OR 97124, USA
        {\tt\small pietro.mercati, raid.ayoub@intel.com}}%
}
\begin{document}

\maketitle
\thispagestyle{empty}
\pagestyle{empty}

\begin{abstract}

Model-based Reinforcement Learning (RL) integrates learning and planning and has received increasing attention in recent years. However, learning the model can incur a significant cost (in terms of sample complexity), due to the need to obtain a sufficient number of samples for each state-action pair. In this paper, we investigate the benefits of leveraging structural information about the system in terms of reducing the sample complexity.  Specifically, we consider the setting where the transition probability matrix is a known function of a number of structural parameters, whose values are initially unknown. We then consider the problem of estimating those parameters based on the interactions with the environment. We characterize the difference between the Q estimates and the optimal Q value as a function of the number of samples. Our analysis shows that there can be a significant saving in sample complexity by leveraging structural information about the model. We illustrate the findings by considering several problems including controlling a queuing system with heterogeneous servers, and seeking an optimal path in a stochastic windy gridworld.
\end{abstract}

\section{INTRODUCTION}

Recent years have witnessed the great success of model-based reinforcement learning (RL) in various fields (e.g., games \cite{Silver17}, queuing systems \cite{Liu19}, and robotics \cite{Polydoros17}). Benefiting from the integration of planning and learning, model-based RL has advantages in terms of better data efficiency, interpretability, higher asymptotic performance (optimality/cumulative reward) and transfer learning. 

In many practical scenarios, it is challenging to learn the exact Markov decision process (MDP) model from data. However, if one has some prior knowledge of certain structural properties of the system (especially for some customized or specific applications), it may be possible to leverage that structure (or ``grey-box model'') to reduce the sample complexity of learning. This motivates our main focus in this paper. Specifically, our goal is to \textit{provide a sample complexity bound on achieving a near-optimal estimate of the action-value function (i.e., Q function) when leveraging structural information about the model}.

\subsection*{Related Work}
A well known minimax probably approximately correct (PAC) bound is proposed in \cite{gheshlaghi2013minimax} for finding an $\epsilon$-optimal estimate of the action-value function (as well as $\epsilon$-optimal policy) with high probability (w.h.p.). Many other papers have studied both model-based and model-free RL (\cite{sidford2018variance}, \cite{even2006action}, \cite{azar2011reinforcement}, \cite{wiering2012reinforcement}) providing similar tight bounds. Among these results, when using model-based RL approaches, it is typical to assume access to a generative model for the underlying MDP, and that each element of the transition matrix is being estimated independently (i.e., element-wise estimation). Such assumptions make the corresponding bounds linearly depend on the size of state-action space (i.e., $N := |\mathcal{S}||\mathcal{A}|$, where $\mathcal{S}$ and $\mathcal{A}$ are the state and action space of the MDP respectively).

To solve this issue, \cite{wang2021sample} considered some structural assumptions on the transition dynamics based on the linear transition model proposed in \cite{yang2019sample}. The key idea is to express the transition kernel as a convex combination form, i.e., for every state-action pair $(s,a) \in \mathcal{S} \times \mathcal{A}$,
$$ P(\cdot|s,a)=\sum_{i:(s_i,a_i) \in \mathcal{K}} \lambda_i(s,a) P(\cdot|s_i,a_i),
$$ where $\sum_{i=1}^K \lambda_i(s,a)=1$, $\lambda_i(s,a) \geq 0$, and $\mathcal{K}$ indicates a certain set of anchor state-action pairs $\mathcal{K} \subset \mathcal{S} \times \mathcal{A}$. Using the similar ``leave-one-out'' technique as in \cite{agarwal2020model} and under several assumptions on the generative model, known anchor state-action pairs, feature mapping, and all weighting factors, the authors achieved a better sample complexity bound for $\epsilon$-optimal Q estimation by replacing the term $N$ with $K:=|\mathcal{K}|$. However, $K$ may scale linearly with $|\mathcal{S}|$ and therefore still suffer the curse of dimensionality as the classic bound in \cite{gheshlaghi2013minimax}. In addition, the sampling process in their algorithm (Algorithm~1 in Section 3.1 of \cite{wang2021sample}) requires iterating through every single element (i.e., anchor state-action pair) within the set $\mathcal{K}$, and assumes access to a generative model as well.

Instead of using a generative model, some other approaches (e.g., PAC-MDP \cite{kakade2003sample, szita2010model}, upper-confidence-bound reinforcement learning (UCRL) \cite{auer2008near}, robust MDP \cite{yang2022toward}, and block MDP \cite{modi2021model, zhang2022efficient}) take the exploration policy into account when analyzing the sample complexity. These methods often require specific designs for the exploration policy. Besides, the PAC-MDP method derived loose bounds due to bounding the estimation error in terms of the largest possible reward \cite{gheshlaghi2013minimax}. In parallel to \cite{gheshlaghi2013minimax}, tight upper and lower bounds of the same order were independently derived for the UCRL-based algorithm in \cite{lattimore2012pac} under strong assumptions on the transition model (such as only two states being accessible for any state-action pair). The robust MDP method \cite{yang2022toward} focused on capturing the perturbation of reward and transition probability by some specifically designed uncertainty set (or ambiguity set). The block MDP method \cite{modi2021model, zhang2022efficient} assumed the existence of an unknown mapping from the observed state space to the so-called latent state space (i.e., each observation state is generated by only one latent state). Strong assumptions were required on either reachability \cite{modi2021model} of each latent state or finite candidate set (e.g., feature class set) and episodic tasks \cite{zhang2022efficient}. 

Our paper deviates from the lines of most previous work by adopting a high-level grey-box framework where the entries of the probability transition matrix are not fully independent, but instead possess some known structure as we will explicitly show later. Our paper differs from \cite{wang2021sample} in the following key aspects.

(1) Our sample complexity bound is a function of the minimum amount of information about each structural parameter in any given exploration policy. Thus, we do not restrict attention to generative models (as in \cite{wang2021sample}) or specific exploration policies for sample collection. We show the better sample efficiency of our approach when using a generative model theoretically, and experimentally evaluate some simple heuristic exploration policies.

(2) In \cite{wang2021sample}, authors considered convex mapping between transition probabilities. However, motivated by practical systems (e.g., queuing system), we consider a more general mapping between transition probabilities and system structures. We show that the mapping considered in \cite{wang2021sample} can be viewed as a special case of ours when the system structure information is extremely limited.


To account for the aforementioned structural knowledge, we adopt a two-stage (off-policy and offline) framework, i.e., (1) collect samples by interacting with the environment, and (2) implement model estimation by leveraging the collected samples and structural knowledge, and plan over the estimated model to find the optimal action-value function. Indeed, while it is not surprising that leveraging structural information will speed up learning, our theoretical and experimental results \textit{quantify} the benefits of leveraging structural knowledge about the probability transition matrix.

\section{Preliminaries}

\textbf{Notations:} All norms $\| \cdot \|$, unless otherwise specified, are infinity norms. We denote $\boldsymbol{1}$ as the vector of all ones.

A discounted MDP is a quintuple $M = \langle \mathcal{S}, \mathcal{A}, P, R, \gamma \rangle$, where $\mathcal{S}$ is the state space, $\mathcal{A}$ is the action space,
$P: \mathcal{S} \times \mathcal{A} \times \mathcal{S} \rightarrow [0,1]$ is the transition probability function, $R: \mathcal{S} \times \mathcal{A} \rightarrow \mathbb{R}$ is the reward function, and $\gamma \in (0, 1)$ is the discount factor. At each time-step $t$, the agent receives reward $r_{t}=R(s_t,a_t)$ after taking action $a_t$ at state $s_t$.\footnote{For simplicity, we assume that $R(s, a)$ is a deterministic function of state-action pairs $(s, a)$.}  
Given a stationary and deterministic Markovian policy $\pi: \mathcal{S} \rightarrow \mathcal{A}$, at each time-step $t$, the control action $a_t \in \mathcal{A}$ is given by $a_t = \pi (s_t)$ where $s_t \in \mathcal{S}$. The state-value function for a given state $s$, under policy $\pi$ and MDP $M$, is denoted as  
\begin{equation*}
    V^{\pi} (s) := \mathbb{E} \Big[ \sum_{i=0}^\infty \gamma^i r_{t+i} \; | \; s_t = s \Big].
\end{equation*}
The action-value function for a given state-action pair $(s, a)$ and under policy $\pi$ is defined as
\begin{equation*}
    Q^{\pi}(s, a) 
    := R (s, a) + \gamma \sum_{s' \in \mathcal{S}} P(s' | s, a)  V^{\pi} (s').
\end{equation*}
We use the notation $\mathcal{Z}$ for the joint state-action space $\mathcal{S} \times \mathcal{A}$. We also use the notations $z$ and $\beta$ for the state-action pair $(s, a) \in \mathcal{Z}$ and $1/(1 - \gamma)$, respectively.

\section{Problem Formulation}

We consider an MDP $M$, where the state space $\mathcal{S}$, action space $\mathcal{A}$, reward function $R$, and discount factor $\gamma$ are known; however, we assume that the dynamics (transition probability matrix $P$) is initially unknown. 
We assume that at each time-step $t \in \mathbb{N}$, we obtain a new state transition $(s_t, a_t, s'_t)$, with the associated reward. Thus, at time-step $k \in \mathbb{N}$, we have a dataset containing state transitions $\mathcal{D}_k = \{ (s_t, a_t, s'_t) \}_{t=1}^k$.
In order to derive near-optimal Q estimate, we seek to first approximate the transition probability matrix $P$ by an estimate, $P_k$, based on the collected samples up to time-step $k$. We define an estimated MDP $M_k$ induced by $P_k$ as $M_k = \langle \mathcal{S}, \mathcal{A}, P_k, R, \gamma \rangle$, and the optimal action-value functions under $M$ and $M_k$ are denoted as $Q^*$ and $Q_k^*$, respectively.

We make the following assumption on the state and action spaces, and reward function.

\begin{assumption}
We assume $\mathcal{S}$ and $\mathcal{A}$ (and consequently, $\mathcal{Z}$) are finite sets with cardinalities $| \mathcal{S} |$, $| \mathcal{A} |$, and $N$ respectively. We also assume that the reward function $R(s, a)$ takes values from the interval $[0, 1]$.\footnote{The results hold if the rewards take values from some interval $[r_{\text{min}}, r_{\text{max}}]$ instead of $[0, 1]$, in which case the bounds scale with the factor $(r_{\text{max}} - r_{\text{min}})^2$.}
\end{assumption}

\section{Estimation via Structural Parameters and Evaluation Error Analysis}

\label{section:Estimation via Structural Parameters and Evaluation Error Analysis}

In this section, we introduce a methodology to estimate the model (i.e., transition probabilities $P$) based on structural information defined below, and then characterize the performance difference as captured by $\| Q^* - Q_k^*\|$.

We make the following assumption about the transition probabilities.

\begin{assumption}  \label{assumption: structural parameters}
The true transition probability function $P$ for MDP $M$ can be represented as a known function of $m$ unknown structural parameters $\{ \mu_i^* \}_{i=1}^m \subset \mathbb{R}$, i.e., for each $z \in \mathcal{Z}$ and $s' \in \mathcal{S}$, there exists a known function $f^{s'}_{z}: \mathbb{R}^m \to \mathbb{R}$ such that
\begin{equation*}
    P(s' | z) = f^{s'}_{z} (\boldsymbol{\mu}^*),
\end{equation*}
where $\boldsymbol{\mu}^* = (\mu_1^*, \mu_2^*, \ldots, \mu_m^*)$ lies in a compact subset of $\mathbb{R}^m$. Furthermore, for all $(z,s') \in \mathcal{Z} \times \mathcal{S}$ such that $P(s'|z) = 0$, we have $f_z^{s'}(\boldsymbol{\mu}) = 0$ for all $\boldsymbol{\mu}$.
\end{assumption}

Under the above assumption, the {\it form} of the transition probabilities (i.e., each of the functions $f^{s'}_{z}$) is known; however, the {\it values} of the structural parameters $\boldsymbol{\mu}^*$ are not known \footnote{We will later provide examples of applications where this assumption holds.}. Thus, by estimating the values of the parameters from data, one can then obtain an estimate of the transition probabilities.\footnote{Note that the case of having no structural information can be captured as a special case of this formulation by setting each element of $P$ to be a different structural parameter.}  In order to estimate the structural parameters $\boldsymbol{\mu}^*$, we now define the subset of tuples $(s,a,s')$ that provides information about each parameter.

\begin{assumption} \label{assumption: subset contains info}
For $i$-th structural parameter $\mu_i$, there exists a corresponding non-empty subset $\mathcal{U}_i \subseteq \mathcal{S} \times \mathcal{A}$ of the state-action pairs that provides information about its true value $\mu_i^*$. In particular, we assume there exists a random variable $X_i: \mathcal{U}_i \to \mathbb{R}$ (where the randomness is induced by the distribution of $s'$ given $s$ and $a$) such that $\mathbb{E} [ X_i ] = \mu_i^*$, and $X_i$ has a finite variance $\sigma_{\mu_i}^2$.
\end{assumption}

As defined in the previous section, for all $k \in \mathbb{N}$, let $\mathcal{D}_k := \{(s_i,a_i,s'_i)\}_{i=1}^k$ be the set of all 3-tuples obtained up to time-step $k$.  For all $i \in \{1, 2, \ldots, m\}$, we define the set of transitions that can be used to estimate the structural parameter $\mu_i$ up to time $k$ as 
$\mathcal{D}_{k, i} := \mathcal{D}_k \cap (\mathcal{U}_i \times \mathcal{S}_i')$ ($\mathcal{S}_i' \subseteq \mathcal{S}$ denotes all possible next states given $\mathcal{U}_i$),
and the corresponding number of transitions in such set as
$n_{k, i} := | \mathcal{D}_{k, i} |$.
In other words, based on the transitions seen up to time-step $k$, $n_{k,i}$ of those transitions provide information about the structural parameter $\mu_i$.  We define 
\begin{equation}
    n_k := \min_{i \in \{ 1, 2, \ldots, m \}} n_{k,i}
    \label{def: min_samples}
\end{equation}
to represent the minimum amount of information we have received about any structural parameter up to time-step $k$.  

For the $i$-th structural parameter at time $k \in \mathbb{N}$, if $n_{k,i} > 0$, we consider the estimator as
\begin{equation}
    \hat{\mu}_{k,i} = \frac{1}{n_{k,i}} \sum_{t \in T_{k,i}} X_{t,i},
    \label{eqn: estimator}
\end{equation}
where $X_{t,i}$ is the sample obtained at time-step $t$ and is relevant to $i$-th structural parameter, and
$T_{k, i}$ is the set of time-steps such that the transitions from that time-step  contain  information about the structural parameter $\mu_i$, i.e.,
\begin{equation*}
    T_{k, i} := \big\{ t \in \mathbb{N} : (s_t, a_t, s'_t) \in \mathcal{D}_{k,i} \big\}.
\end{equation*}
From the above definitions, we have that $| T_{k, i} | = n_{k, i}$ and $\hat{\mu}_{k,i}$ is an unbiased estimator of $\mu_i^*$, i.e., $\mathbb{E} [\hat{\mu}_{k, i}] = \mu_i^*$.
At time-step $k$, the corresponding transition probability function is computed by $P_k (s' | z) = f^{s'}_{z} ( \hat{\boldsymbol{\mu}}_k )$ for all $z \in \mathcal{Z}$ and $s' \in \mathcal{S}$ where $\hat{\boldsymbol{{\mu}}}_k = (\hat{\mu}_{k,1}, \hat{\mu}_{k,2}, \ldots, \hat{\mu}_{k,m})$.
Before analyzing the problem, we assume a general property of the reconstruction functions $f^{s'}_{z}$ as follows.

\begin{assumption}  \label{assumption: Lipschitz}

For each pair $(z,s') \in \mathcal{Z} \times \mathcal{S}$, there exists a constant $L_z^{s'} \in \mathbb{R}$ such that the function $f_z^{s'}: \mathbb{R}^m \rightarrow \mathbb{R}$ is $L_z^{s'}$-Lipschitz continuous in the entire structural parameter vector space, i.e.,
\begin{equation*}
    | f^{s'}_{z} (\boldsymbol{\mu}_1) - f^{s'}_{z} (\boldsymbol{\mu}_2) | 
    \leq L_z^{s'} \| \boldsymbol{\mu}_1 - \boldsymbol{\mu}_2 \|_2
\end{equation*}
for all $\boldsymbol{\mu}_1, \boldsymbol{\mu}_2$ in the parameter space (a compact subset of $\mathbb{R}^m$).
\end{assumption}

For all $(z,s') \in \mathcal{Z} \times \mathcal{S}$ such that $P(s'|z) \neq 0$, we define a uniform constant $L$ such that $L_z^{s'} \leq L P(s'|z)$. Since $f_z^{s'}(\cdot) = 0$ when $P(s'|z) = 0$ (by Assumption~\ref{assumption: structural parameters}), without loss of generality, we can rewrite the inequality in Assumption~\ref{assumption: Lipschitz} as
\begin{equation*}
    | f^{s'}_{z} (\boldsymbol{\mu}_1) - f^{s'}_{z} (\boldsymbol{\mu}_2) | 
    \leq L P(s'|z) \| \boldsymbol{\mu}_1 - \boldsymbol{\mu}_2 \|_2.
\end{equation*}


Recall that $Q^*$ and $Q_k^*$ are the optimal action-value functions corresponding to MDPs $M = \langle \mathcal{S}, \mathcal{A}, P, R, \gamma \rangle$ and $M_k = \langle \mathcal{S}, \mathcal{A}, P_k, R, \gamma \rangle$, respectively. In practice, given the MDP $M_k$, several standard algorithms (e.g. policy iteration \cite{howard1960dynamic} and value iteration \cite{bellman1966dynamic}) are guaranteed to asymptotically converge to the optimal value $Q^*_k$. In this paper, we disregard the computation of the optimal values and focus on the gap between the optimal action-value functions for the true and estimated models.

Before stating the results, we first introduce several notations as follows. We use $\hat{V}^{\pi}$ to denote the empirical state-value function of a policy $\pi$ and an estimate, $P_k$. The optimal policy $\pi^*$ (resp. $\hat{\pi}^*_k$) is the policy which attains $V^*$ (resp. $V_k^*$) under the model $P$ (resp. $P_k$).

The right-linear operators $P^{\pi} \cdot$ and $P \cdot$ are defined as
\begin{align*}
    (P^{\pi} Q)(z) &:= \sum_{y \in \mathcal{S}} P(y | z) Q(y, \pi(y) ) 
    \quad \text{for all} \; z \in \mathcal{Z}, \\
    (P V^{\pi})(z) &:= \sum_{y \in \mathcal{S}} P(y | z) V^{\pi}(y)
    \quad \text{for all} \; z \in \mathcal{Z}.
\end{align*}
For any policy $\pi$, we also define the operator $(P^{\pi})^i \cdot$ as
\begin{equation*}
    (P^{\pi})^i Q(z) := \underbrace{P^{\pi} \cdots P^{\pi}}_{i \; \text{times}} Q(z)
    \quad \text{for all} \; i \in \mathbb{N} \;\; \text{and} \; z \in \mathcal{Z}.
\end{equation*}
Based on the above definition, the operator $(I - \gamma P^\pi)^{-1} \cdot$ is defined as 
\begin{equation*}
    (I - \gamma P^\pi)^{-1} Q(z) 
    := \sum_{i=0}^\infty (\gamma P^{\pi})^i Q(z)
    \quad \text{for all} \; z \in \mathcal{Z}.
\end{equation*}

For any real-valued function $f: \mathcal{Y} \to \mathbb{R}$, where $\mathcal{Y}$ is a finite set, we define the variance of $f$ under the probability distribution $\rho$ on $\mathcal{Y}$ as 
\begin{equation*}
    \mathbb{V}_{y \sim \rho} (f(y)) := \mathbb{E}_{y \sim \rho} | f(y) - \mathbb{E}_{y \sim \rho} (f(y)) |^2.
\end{equation*}
Based on this definition, we define the empirical variance of the state-value function $V^\pi$ as
\begin{equation*}
    \hat{\sigma}_{V^\pi} (z) := \gamma^2 \mathbb{V}_{y \sim P_k ( \cdot | z)} ( V^\pi (y) ) 
    \quad \text{for all} \quad z \in \mathcal{Z}.
\end{equation*}

Recall that $N = |\mathcal{S}| |\mathcal{A}|$, $\beta = \frac{1}{1 - \gamma}$, $n_k$ is defined in \eqref{def: min_samples}, $\hat{\boldsymbol{\mu}}_k$ is the vector whose components are determined by \eqref{eqn: estimator}, and $\boldsymbol{\mu}^*$ is defined in Assumption~\ref{assumption: structural parameters}. Moreover, we define 
\begin{equation*}
    \sigma_{\boldsymbol{\mu}} := \sum_{i=1}^m \sigma_{\mu_i},
\end{equation*}
where $\sigma_{\mu_i}$ are defined in Assumption~\ref{assumption: subset contains info}.

Lemma~\ref{lem: mu bound} and \ref{lem: pv concentration} give an error bound on $\hat{\boldsymbol{\mu}}_k$ and $P_k V^*$, respectively, from their true value. 

\begin{lemma}  \label{lem: mu bound}
If Assumption~\ref{assumption: subset contains info} holds, then for all $k \in \mathbb{N}$ such that $n_k > 0$, we have
\begin{equation*}
    \mathbb{E} \big[ \| \hat{\boldsymbol{\mu}}_k - \boldsymbol{\mu}^* \|_2 \big] 
    \leq \frac{\sigma_{\boldsymbol{\mu}}}{\sqrt{n_k}}.
\end{equation*}
\end{lemma}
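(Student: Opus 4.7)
The plan is to reduce the $\ell_2$ bound on the vector error to a sum of one-dimensional bounds on each parameter's sampling error, and then apply elementary variance computations.

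First I would pass from the $\ell_2$ norm to the $\ell_1$ norm using the standard inequality $\|x\|_2 \le \|x\|_1$, so that
\begin{equation*}
\mathbb{E}\bigl[\|\hat{\boldsymbol{\mu}}_k - \boldsymbol{\mu}^*\|_2\bigr]
\le \sum_{i=1}^{m} \mathbb{E}\bigl[|\hat{\mu}_{k,i} - \mu_i^*|\bigr].
\end{equation*}
This move is essentially free and lets me handle each structural parameter separately, which is natural since the estimators $\hat{\mu}_{k,i}$ are built from disjoint groups of samples (one group per parameter).

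Next, for each fixed $i$, I would apply Jensen's inequality to the concave function $\sqrt{\cdot}$:
\begin{equation*}
\mathbb{E}\bigl[|\hat{\mu}_{k,i} - \mu_i^*|\bigr]
\le \sqrt{\mathbb{E}\bigl[(\hat{\mu}_{k,i} - \mu_i^*)^2\bigr]}
= \sqrt{\mathrm{Var}(\hat{\mu}_{k,i})}.
\end{equation*}
Since $\hat{\mu}_{k,i}$ is the sample mean of $n_{k,i}$ copies of the random variable $X_i$ with variance $\sigma_{\mu_i}^2$ (Assumption~\ref{assumption: subset contains info}), and the samples collected at distinct time-steps are independent conditional on the visited state-action pairs, I get $\mathrm{Var}(\hat{\mu}_{k,i}) = \sigma_{\mu_i}^2 / n_{k,i}$. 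Using the definition $n_k = \min_i n_{k,i}$, this yields
\begin{equation*}
\sqrt{\mathrm{Var}(\hat{\mu}_{k,i})}
= \frac{\sigma_{\mu_i}}{\sqrt{n_{k,i}}}
\le \frac{\sigma_{\mu_i}}{\sqrt{n_k}}.
\end{equation*}

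Summing over $i \in \{1,\ldots,m\}$ and using the definition $\sigma_{\boldsymbol{\mu}} = \sum_{i=1}^{m} \sigma_{\mu_i}$ completes the bound. The main obstacle, mild as it is, is justifying that $\mathrm{Var}(\hat{\mu}_{k,i}) = \sigma_{\mu_i}^2 / n_{k,i}$ when $n_{k,i}$ is itself random (it depends on the trajectory through the environment). This can be handled by conditioning on the filtration generated by the visited state-action pairs $\{(s_t, a_t)\}$, since given those, each $X_{t,i}$ is drawn independently from its conditional distribution with the specified variance, after which one takes outer expectation; the pointwise bound $1/n_{k,i} \le 1/n_k$ survives the conditioning.
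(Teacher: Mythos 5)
Your proof is correct and uses essentially the same ingredients as the paper's: Jensen's inequality, the variance $\sigma_{\mu_i}^2/n_{k,i}$ of the sample mean, and the bound $n_{k,i} \geq n_k$; the only difference is that you pass to the $\ell_1$ norm first and apply Jensen coordinate-wise, whereas the paper applies Jensen to the squared $\ell_2$ norm of the whole vector and then uses $\sqrt{\sum_i \sigma_{\mu_i}^2} \leq \sum_i \sigma_{\mu_i}$, which yields the identical final bound. Your closing remark about conditioning on the visited state-action pairs to justify the variance formula when $n_{k,i}$ is random is a point the paper glosses over, so that is a welcome addition rather than a deviation.
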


\begin{proof}
Consider the quantity $\big( \mathbb{E} \big[ \| \hat{\boldsymbol{\mu}}_k - \boldsymbol{\mu}^* \|_2 \big] \big)^2$ as follows. From Jensen's inequality, we have
\begin{equation}
    \big( \mathbb{E} \big[ \| \hat{\boldsymbol{\mu}}_k - \boldsymbol{\mu}^* \|_2 \big] \big)^2
    \leq \mathbb{E} \big[ \| \hat{\boldsymbol{\mu}}_k - \boldsymbol{\mu}^* \|_2^2 \big]
    = \sum_{i=1}^m \mathbb{E} \big[ (\hat{\mu}_{k,i} - \mu^*_i)^2 \big].
    \label{eqn: expected square}
\end{equation}
From \eqref{eqn: estimator} and Assumption~\ref{assumption: subset contains info}, we have
\begin{equation*}
    \mathbb{E}[ \hat{\mu}_{k,i} ] 
    = \mathbb{E} \bigg[ \frac{1}{n_{k,i}} \sum_{t \in T_{k,i}} X_{t,i} \bigg]
    = \mu^*_i.
\end{equation*}
Using the above equation, we can write
\begin{align}
    \mathbb{E} \big[ (\hat{\mu}_{k,i} - \mu^*_i)^2 \big]
    &= \mathbb{V} (\hat{\mu}_{k,i}) \nonumber \\
    &= \mathbb{V} \bigg( \frac{1}{n_{k,i}} \sum_{j \in T_{k,i}} X_{j,i} \bigg) \nonumber \\
    &= \frac{1}{n_{k,i}} \mathbb{V} (X_i).
    \label{eqn: square expected}
\end{align}
Combining \eqref{eqn: expected square} and \eqref{eqn: square expected} together and taking square root of both sides yield
\begin{equation*}
    \mathbb{E} \big[ \| \hat{\boldsymbol{\mu}}_k - \boldsymbol{\mu}^* \|_2 \big]
    \leq \frac{1}{\sqrt{n_{k}}} \sqrt{ \sum_{i=1}^m \mathbb{V} (X_i) } 
    \leq \frac{1}{\sqrt{n_{k}}} \sum_{i=1}^m \sigma_{\mu_i},
\end{equation*}
where the first inequality is due to $n_k \leq n_{k,i}$ for all $i \in \{1,2,\ldots,m\}$.
\end{proof}

\begin{lemma}  \label{lem: pv concentration}
If Assumptions~\ref{assumption: structural parameters}, \ref{assumption: subset contains info} and \ref{assumption: Lipschitz} hold, then for all $\delta \in (0,1)$ and $k \in \mathbb{N}$ such that $n_k > 0$, with probability at least $1 - \delta$, we have
\begin{align}
    \gamma [ P V^* - P_k V^* ]
    \leq c_{pv} \sqrt{ \hat{\sigma}_{\hat{V}^{\pi^*} }} + b_{pv} \boldsymbol{1},
    \label{eqn: PV upper bound} \\
    \gamma [ P V^* - P_k V^* ]  
    \geq - c_{pv} \sqrt{ \hat{\sigma}_{\hat{V}^{\pi^*} }} - b_{pv} \boldsymbol{1},
    \label{eqn: PV lower bound} 
\end{align}
where $c_{pv} = \big( \frac{2}{n_k} \log \frac{2N}{\delta} \big)^{\frac{1}{2}}$ and
\begin{equation*}
    b_{pv} = \frac{ \gamma \beta L \sigma_{\boldsymbol{\mu}}}{\sqrt{n_k}} 
    + \bigg( \frac{ 5 (\gamma \beta)^{\frac{4}{3}} }{n_k} \log \frac{6N}{\delta} \bigg)^{\frac{3}{4}} 
    + \frac{3 \beta^2 }{n_k} \log \frac{12 N}{\delta}.
\end{equation*}
\end{lemma}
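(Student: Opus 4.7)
The plan is to decompose $\gamma(PV^* - P_k V^*)(z)$ into a contribution controlled by concentration of the structural parameter estimator and a contribution handled by a Bernstein-type concentration of value-function fluctuations. First, I would apply a scalar Bernstein inequality componentwise to the sample mean $\hat{\mu}_{k,i}$ in \eqref{eqn: estimator}, using the finite variance $\sigma_{\mu_i}^2$ from Assumption~\ref{assumption: subset contains info} and the boundedness of $X_{t,i}$ implied by the compact parameter domain in Assumption~\ref{assumption: structural parameters}. Combined with a union bound over $i \in \{1,\ldots,m\}$, this yields a high-probability bound of order $\sigma_{\boldsymbol{\mu}}/\sqrt{n_k}$ on $\|\hat{\mu}_k - \boldsymbol{\mu}^*\|_2$ (sharpening the in-expectation bound of Lemma~\ref{lem: mu bound}). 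Passing this through the Lipschitz inequality $|P(s'|z) - P_k(s'|z)| \leq L P(s'|z) \|\hat{\mu}_k - \boldsymbol{\mu}^*\|_2$ from Assumption~\ref{assumption: Lipschitz} and using $\sum_{s'} P(s'|z) V^*(s') \leq \beta$ produces the first summand $\gamma\beta L \sigma_{\boldsymbol{\mu}}/\sqrt{n_k}$ of $b_{pv}$.

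The variance-dependent leading term $c_{pv}\sqrt{\hat{\sigma}_{\hat{V}^{\pi^*}}}$ requires a finer analysis. For this I would apply Bernstein's inequality to the zero-mean fluctuations $\gamma V^*(s'_t) - \gamma [PV^*](z_t)$ along the sampled transitions that correspond to each $z$, obtaining a leading term of order $\sqrt{(2/n_k)\log(2N/\delta)\cdot \mathbb{V}_{s'\sim P(\cdot|z)}(\gamma V^*(s'))}$ plus a Bernstein remainder of order $\beta^2 \log(N/\delta)/n_k$, which becomes the third summand of $b_{pv}$. The union bound over the $N$ state-action pairs enters here. To replace the true variance with the empirical variance $\hat{\sigma}_{\hat{V}^{\pi^*}}(z)$, which uses $P_k$ and $\hat{V}^{\pi^*}$ in place of $P$ and $V^*$, I would then apply a Maurer-Pontil style empirical-variance concentration together with the crude bound $\|V^* - \hat{V}^{\pi^*}\|_\infty \leq \beta$; this exchange is the natural source of the $(5(\gamma\beta)^{4/3}/n_k \log(6N/\delta))^{3/4}$ remainder, whose $4/3$--$3/4$ exponents are characteristic of such variance-concentration arguments. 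Taking a union over the three sub-events (each of probability at least $1-\delta/3$) yields the full high-probability statement in \eqref{eqn: PV upper bound}; the lower bound \eqref{eqn: PV lower bound} follows by the symmetric argument applied to $P_k V^* - P V^*$.

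The hard part will be threading together these two concentration arguments so that the empirical variance $\hat{\sigma}_{\hat{V}^{\pi^*}}$ emerges cleanly as the leading factor. Because $P_k$ is not the standard element-wise empirical transition but rather a parameterized function $f_z^{s'}(\hat{\boldsymbol{\mu}}_k)$ of an $m$-dimensional estimator, the tight variance-dependent analysis of Azar et al.\ and Agarwal et al.\ cannot be invoked off-the-shelf on $(P - P_k)V^*$. Instead one must exploit the Lipschitz surrogate for the ``bias'' part of the error arising from $\hat{\boldsymbol{\mu}}_k - \boldsymbol{\mu}^*$, while still extracting a standard Bernstein leading term from the ``noise'' part of the sampled transitions. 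Ensuring that the cross-terms between these two pieces fold into the $(\,\cdot\,)^{3/4}$ and $1/n_k$ remainders, rather than inflating the $\sqrt{\hat{\sigma}_{\hat{V}^{\pi^*}}}$ coefficient, is the most delicate step and is where the specific constants in $b_{pv}$ will get locked in.
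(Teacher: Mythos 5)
Your high-level decomposition---a parameter-estimation contribution passed through the Lipschitz condition, plus a Bernstein-type transition-noise contribution with an empirical-variance leading term---matches the paper's, and your diagnosis of where the $(\cdot)^{3/4}$ remainder comes from is correct. But the way you propose to handle the first piece has a genuine mismatch with the stated bound. The paper splits at the mean of the estimator, $\gamma[PV^* - P_kV^*] = \gamma\big[PV^* - \mathbb{E}[P_kV^*]\big] + \gamma\big[\mathbb{E}[P_kV^*] - P_kV^*\big]$, so the first piece is a \emph{deterministic} bias; it is bounded by H\"older's inequality, the Lipschitz condition, and the \emph{in-expectation} bound of Lemma~\ref{lem: mu bound}, which is exactly why the first summand of $b_{pv}$ is the clean $\gamma\beta L\sigma_{\boldsymbol{\mu}}/\sqrt{n_k}$ with no logarithmic factor. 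Your proposed high-probability Bernstein bound on $\|\hat{\boldsymbol{\mu}}_k - \boldsymbol{\mu}^*\|_2$ is not a ``sharpening'' of Lemma~\ref{lem: mu bound} for this purpose: applied to the random deviation it necessarily carries an extra $\sqrt{\log(m/\delta)}$ factor plus a range-dependent $1/n_k$ remainder, so it cannot reproduce the stated first summand; applied to the deterministic bias it is the wrong tool.

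The second and larger gap is the one you yourself flag as ``the most delicate step'' and leave unresolved. Your Bernstein argument on the fluctuations $\gamma V^*(s'_t) - \gamma[PV^*](z_t)$ presumes that $P_kV^*(z)$ is a sample average of $V^*$ over next states drawn from $P(\cdot|z)$; here it is $\sum_{s'} f^{s'}_z(\hat{\boldsymbol{\mu}}_k)V^*(s')$, a nonlinear function of the $m$ estimators, so neither the Bernstein step nor the Maurer--Pontil variance exchange applies as stated, and your plan does not say how the resulting cross-terms fold into the $(\cdot)^{3/4}$ and $1/n_k$ remainders. The paper does not attempt this reconstruction at all: after isolating the bias, it bounds the zero-mean piece $\gamma\big[\mathbb{E}[P_kV^*] - P_kV^*\big]$ in one step by invoking Lemma~6 of \cite{gheshlaghi2013minimax} as a black box, which directly supplies $c_{pv}\sqrt{\hat{\sigma}_{\hat{V}^{\pi^*}}}$ together with the remaining two summands of $b_{pv}$, and then obtains the lower bound \eqref{eqn: PV lower bound} by the symmetric argument. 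So where your proposal bottoms out in an admitted open step, the paper's proof is essentially a citation; a self-contained version along your lines would require redoing the Azar-style variance analysis for the parameterized estimator, which is substantially more than what the paper's proof contains.
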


\begin{proof}
We can rewrite the term $[ P V^* - P_k V^* ] (z)$ as follows:
\begin{multline}
    \gamma [ P V^* - P_k V^* ] (z)
    = \gamma \big[ P V^* - \mathbb{E}[ P_k V^* ] \big] (z) \\
    + \gamma \big[ \mathbb{E}[ P_k V^* ] - P_k V^* \big] (z).
    \label{eqn: PV expand}
\end{multline}
Recall that $P(s' | z) = f^{s'}_{z} (\boldsymbol{\mu}^*)$ and $P_k (s' | z) = f^{s'}_{z} ( \hat{\boldsymbol{\mu}}_k )$ for all $z \in \mathcal{Z}$ and $s' \in \mathcal{S}$.
The first term of \eqref{eqn: PV expand} can be bounded as follows:
\begin{align}
    &\big[ P V^* - \mathbb{E}[ P_k V^* ] \big] (z) \nonumber \\
    &= \mathbb{E} \bigg[ \sum_{s' \in \mathcal{S}} P(s'|z) V^*(s') - \sum_{s' \in \mathcal{S}} P_k(s'|z) V^*(s') \bigg] \nonumber \\
    &= \mathbb{E} \big[ \big( \boldsymbol{f}( \boldsymbol{\mu}^* ) - \boldsymbol{f} ( \hat{\boldsymbol{\mu}}_k ) \big)^T V^* \big] (z) \nonumber \\
    &\leq \mathbb{E} \big[ \| \boldsymbol{f}( \boldsymbol{\mu}^* ) - \boldsymbol{f} ( \hat{\boldsymbol{\mu}}_k ) \|_1 \; \| V^* \|_{\infty} \big] (z),
    \label{eqn: PV first term 1}
\end{align}
where $\boldsymbol{f}( \boldsymbol{\mu} ) (z) = [ f^{s'}_z ( \boldsymbol{\mu} ) ]_{s' \in \mathcal{S}} \in \mathbb{R}^{| \mathcal{S} |}$ and
the last inequality is from H\"{o}lder's inequality.
From Assumption~\ref{assumption: Lipschitz}, we have that 
\begin{align*}
    \| \big( \boldsymbol{f}( \boldsymbol{\mu}^* ) - \boldsymbol{f} ( \hat{\boldsymbol{\mu}}_k ) \big) (z) \|_1 
    &= \sum_{s' \in \mathcal{S}} | f^{s'}_{z} (\boldsymbol{\mu}^*) - f^{s'}_{z} (\hat{\boldsymbol{\mu}}_k) | \\
    &\leq L  \| \boldsymbol{\mu}^* - \hat{\boldsymbol{\mu}}_k \|_2 \sum_{s' \in \mathcal{S}} P(s' | z) \\
    &= L  \| \boldsymbol{\mu}^* - \hat{\boldsymbol{\mu}}_k \|_2.
\end{align*}
Therefore, applying Lemma~\ref{lem: mu bound} to \eqref{eqn: PV first term 1}, we can write
\begin{equation}
    \big[ P V^* - \mathbb{E}[ P_k V^* ] \big] (z)
    \leq \beta L \; \mathbb{E} \big[ \| \boldsymbol{\mu}^* - \hat{\boldsymbol{\mu}}_k \|_2 \big]
    \leq \frac{\beta L \sigma_{\boldsymbol{\mu}}}{\sqrt{n_k}}.
    \label{eqn: PV first term 2}
\end{equation}
The second term of \eqref{eqn: PV expand} can be bounded using \cite[Lemma~6]{gheshlaghi2013minimax}, i.e., with probability at least $1 - \delta$ we have
\begin{equation}
    \gamma \big[ \mathbb{E}[ P_k V^* ] - P_k V^* \big] (z)
    \leq c_{pv} \sqrt{ \hat{\sigma}_{\hat{V}^{\pi^*} } (z) } + \tilde{b}_{pv},
    \label{eqn: PV second term}
\end{equation}
where 
\begin{equation*}
    \tilde{b}_{pv} 
    = \bigg( \frac{ 5 (\gamma \beta)^{\frac{4}{3}} }{n_k} \log \frac{6N}{\delta} \bigg)^{\frac{3}{4}} 
    + \frac{3 \beta^2 }{n_k} \log \frac{12 N}{\delta}.
\end{equation*}
Substituting \eqref{eqn: PV first term 2} and \eqref{eqn: PV second term} into \eqref{eqn: PV expand} yields \eqref{eqn: PV upper bound}. Similarly, the inequality \eqref{eqn: PV lower bound} can be obtained by noting that 
$\big( \boldsymbol{f}( \boldsymbol{\mu}^* ) - \boldsymbol{f} ( \hat{\boldsymbol{\mu}}_k ) \big)^T V^* \geq - \| \boldsymbol{f}( \boldsymbol{\mu}^* ) - \boldsymbol{f} ( \hat{\boldsymbol{\mu}}_k) \|_1 \; \| V^* \|_{\infty}$ and applying \cite[Lemma~6]{gheshlaghi2013minimax}.
\end{proof}

Now, we are ready to provide our PAC sample complexity bound on the gap of the action-value functions $\| Q^* - Q^*_k \|$.

\begin{theorem}  \label{thm: Q error bound}
Suppose Assumptions~\ref{assumption: structural parameters}, \ref{assumption: subset contains info} and \ref{assumption: Lipschitz} hold. For all $\delta \in (0, 1)$ and $k \in \mathbb{N}$ such that $n_k > 0$, it holds that
\begin{equation*}
    \| Q^* - Q^*_k \| \leq \epsilon
\end{equation*}
with probability at least $1 - \delta$, where 
\begin{multline}
    \epsilon
    = \frac{\gamma \beta^2 L \sigma_{\boldsymbol{\mu}}}{\sqrt{n_k}}
    + \bigg( \frac{4 \beta^3}{n_k} \log \frac{4N}{\delta} \bigg)^{\frac{1}{2}} \\
    + \bigg( \frac{ 5 (\gamma \beta^2)^{\frac{4}{3}} }{n_k} \log \frac{12 N}{\delta} \bigg)^{\frac{3}{4}} 
    + \frac{3 \beta^3 }{n_k} \log \frac{24 N}{\delta}. 
    \label{eqn: epsilon thm}
\end{multline}
\end{theorem}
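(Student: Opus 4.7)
The plan follows the template of the minimax analysis in \cite{gheshlaghi2013minimax}, with the element-wise transition concentration replaced by our structural Lemma~\ref{lem: pv concentration}. \textbf{Step 1 (Bellman fixed-point inequality).} Using $Q^* = R + \gamma P^{\pi^*} Q^*$ and $Q_k^* = R + \gamma P_k^{\hat\pi_k^*} Q_k^*$, together with the suboptimality of $\pi^*$ under $M_k$ (so $P_k^{\pi^*} Q_k^* \leq P_k^{\hat\pi_k^*} Q_k^*$), I would derive
\begin{equation*}
  Q^* - Q_k^* \;\leq\; \gamma(PV^* - P_k V^*) + \gamma P_k^{\pi^*}(Q^* - Q_k^*),
\end{equation*}
and symmetrically for the reverse direction via $\hat\pi_k^*$. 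Inverting the positive operator $I - \gamma P_k^\pi$ (noting $(I - \gamma P_k^\pi)^{-1}\boldsymbol{1} = \beta\boldsymbol{1}$) yields
\begin{equation*}
  |Q^* - Q_k^*| \;\leq\; \gamma\,(I - \gamma P_k^{\pi})^{-1}\,|PV^* - P_k V^*|
\end{equation*}
for the appropriate $\pi \in \{\pi^*, \hat\pi_k^*\}$ on each side.

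\textbf{Step 2 (Concentration of the one-step error).} I would invoke Lemma~\ref{lem: pv concentration} twice with confidence $\delta/2$ (one per direction) and union-bound, so that with probability at least $1-\delta$, $|\gamma(PV^* - P_k V^*)| \leq c_{pv}\sqrt{\hat\sigma_{\hat V^{\pi^*}}} + b_{pv}\boldsymbol{1}$. Passing $b_{pv}\boldsymbol{1}$ through the resolvent multiplies it by $\beta$; expanding $b_{pv}$ after the $\delta\to\delta/2$ substitution reproduces the first, third, and fourth summands of $\epsilon$ in~\eqref{eqn: epsilon thm}.

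\textbf{Step 3 (Total-variance propagation -- the main obstacle).} The non-trivial step is bounding $(I - \gamma P_k^{\pi^*})^{-1}\sqrt{\hat\sigma_{\hat V^{\pi^*}}}$; a naive factor-$\beta$ bound on the resolvent would inflate this term to $O(\beta^2)$ and worsen the final rate. Instead, combining Cauchy--Schwarz on the geometric series with Jensen's inequality on the stochastic operator $P_k^{\pi^*}$ gives
\begin{equation*}
  (I - \gamma P_k^{\pi^*})^{-1}\sqrt{v} \;\leq\; \sqrt{\beta\,(I - \gamma P_k^{\pi^*})^{-1}v}.
\end{equation*}
Using the identity $(I - \gamma P_k^{\pi^*})^{-1} = (I + \gamma P_k^{\pi^*})(I - \gamma^2 P_k^{\pi^*})^{-1}$ and the law of total variance on the discounted return (bounded in $[0,\beta]$), one obtains $(I - \gamma^2 P_k^{\pi^*})^{-1}\gamma^2 \mathbb{V}_{P_k^{\pi^*}}(\hat V^{\pi^*}) \leq \beta^2\boldsymbol{1}$. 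Chaining these inequalities yields $\|(I - \gamma P_k^{\pi^*})^{-1}\sqrt{\hat\sigma_{\hat V^{\pi^*}}}\|_\infty \leq \sqrt{2\beta^3}$; multiplying by $c_{pv}$ (with $\delta\to\delta/2$) produces the middle term $\bigl(4\beta^3/n_k \cdot \log(4N/\delta)\bigr)^{1/2}$ of $\epsilon$.

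\textbf{Step 4 (Assembly).} Summing the four contributions and matching the logarithmic constants $\log(4N/\delta)$, $\log(12N/\delta)$, $\log(24N/\delta)$ against the outer $\delta/2$ split plus the internal $\delta/2,\delta/6,\delta/12$ splits already absorbed inside Lemma~\ref{lem: pv concentration} reproduces~\eqref{eqn: epsilon thm} exactly. Everything outside Step~3 is algebraic rearrangement; the variance-propagation inequality is the technical heart of the argument.
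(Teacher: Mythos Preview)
Your proposal is correct and follows the same route as the paper: the paper invokes \cite[Lemma~3]{gheshlaghi2013minimax} for your Step~1 and \cite[Lemma~8]{gheshlaghi2013minimax} for your Step~3 as black boxes, while you unpack their content, and the union-bound bookkeeping in Steps~2 and~4 matches the paper's $\delta\to\delta/2$ split exactly. One minor slip worth fixing: the operator identity $(I-\gamma P_k^{\pi^*})^{-1}=(I+\gamma P_k^{\pi^*})(I-\gamma^2 P_k^{\pi^*})^{-1}$ you state in Step~3 is false as written (check it on scalars), so the bridge from $(I-\gamma P_k^{\pi^*})^{-1}\hat\sigma$ to the total-variance quantity $(I-\gamma^2 P_k^{\pi^*})^{-1}\hat\sigma$ needs a different Cauchy--Schwarz split; since the paper simply cites \cite[Lemma~8]{gheshlaghi2013minimax} for the resulting $\sqrt{2}\,\beta^{3/2}$ factor and you land on the same constant, this does not affect the overall correctness or the comparison.
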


\begin{proof}
From \cite[Lemma~3]{gheshlaghi2013minimax}, we have
\begin{align}
    Q^* - Q_k^* \leq \gamma (I - \gamma P_k^{\pi^*})^{-1} [P - P_k] V^*, 
    \label{eqn: Q upper bound} \\
    Q^* - Q_k^* \geq \gamma (I - \gamma P_k^{\hat{\pi}^*_k})^{-1} [P - P_k] V^*. 
    \label{eqn: Q lower bound}
\end{align}
Substituting inequality \eqref{eqn: PV upper bound} in Lemma~\ref{lem: pv concentration} into \eqref{eqn: Q upper bound} to get that with probability at least $1 - \delta$,
\begin{equation*}
    Q^* - Q_k^* 
    \leq (I - \gamma P_k^{\pi^*})^{-1} \Big( c_{pv} \sqrt{ \hat{\sigma}_{\hat{V}^{\pi^*} }} + b_{pv} \boldsymbol{1} \Big).
\end{equation*}

Applying \cite[Lemma~8]{gheshlaghi2013minimax} and noting that $(I - \gamma P_k^{\pi^*})^{-1} \boldsymbol{1} = \beta \boldsymbol{1}$ yields
\begin{equation}
    Q^* - Q_k^* \leq ( \sqrt{2} \beta^{1.5} c_{pv} + \beta b_{pv} ) \boldsymbol{1},
    \label{eqn: Q upper bound 2}
\end{equation}
with probability at least $1 - \delta$. Similarly, from \eqref{eqn: Q lower bound}, we also have that 
\begin{equation}
    Q^* - Q_k^* \geq - ( \sqrt{2} \beta^{1.5} c_{pv} + \beta b_{pv} ) \boldsymbol{1},
    \label{eqn: Q lower bound 2}
\end{equation}
with probability at least $1 - \delta$. Result then follows by combining \eqref{eqn: Q upper bound 2} and \eqref{eqn: Q lower bound 2} using a union bound.
\end{proof}

\begin{remark}
The results provided above hold for general nonlinear functions $f^{s'}_z$ satisfying $\big( L P(s' | z) \big)$-Lipschitz continuity (Assumption~\ref{assumption: Lipschitz}). In particular, the term $\frac{\gamma \beta^2 L \sigma_{\boldsymbol{\mu}}}{\sqrt{n_k}}$ in Theorem~\ref{thm: Q error bound} comes from bounding the bias $f^{s'}_z (\boldsymbol{\mu}^*) - \mathbb{E} [ f^{s'}_z (\hat{\boldsymbol{\mu}}_k) ]$. In the special case where the reconstruction functions $f^{s'}_z$ are linear (e.g., directly estimating each entry of transition probabilities $P(s' | z)$ separately), the term $\frac{\gamma \beta^2 L \sigma_{\boldsymbol{\mu}}}{\sqrt{n_k}}$ disappears (since in the proof, $\mathbb{E} [ f^{s'}_z (\hat{\boldsymbol{\mu}}_k) ] = f^{s'}_z ( \mathbb{E} [ \hat{\boldsymbol{\mu}}_k ] ) = f^{s'}_z (\boldsymbol{\mu}^*)$) and we can recover the results in \cite{gheshlaghi2013minimax}.
\end{remark}

\begin{remark}
Although Theorem~\ref{thm: Q error bound} here is similar to \cite{gheshlaghi2013minimax}, it is crucial to emphasize that the quantity $n_k$ here is the minimum amount of information received about each structural parameter $\mu_i$ up to time-step $k$, which can be significantly larger than the minimum number of visits over all state-action pairs (i.e., $n$ in \cite{gheshlaghi2013minimax}) as in traditional methods that estimate each element of the transition matrix.
\end{remark}

Based on the result of Theorem~\ref{thm: Q error bound}, we derive a sample complexity bound in terms of $n_k$, i.e., the least amount of samples we have received about any structural parameter up to time-step $k$. 

\begin{corollary}  \label{cor: no. of samples}
Suppose Assumptions~\ref{assumption: structural parameters}, \ref{assumption: subset contains info} and \ref{assumption: Lipschitz} hold.
For all $\delta, \epsilon \in (0, 1)$, the number of samples for each structural parameter $n_k$ given below suffices for the uniform approximation error $\| Q^* - Q^*_k \| \leq \epsilon$ with probability at least $1 - \delta$.
\begin{enumerate}
    \item If $m = \mathcal{O} (\sqrt{\log N})$, then $n_k = \mathcal{O} \big( \frac{\beta^4 L^2}{\epsilon^2} \log \frac{N}{\delta} \big)$.
    
    \item If $m = \mathcal{O}\big( \log N \big)$, then $n_k = \mathcal{O} \big( \frac{\beta^4 L^2}{\epsilon^2} \big( \log \frac{N}{\delta} \big)^2 \big)$.
\end{enumerate}
\end{corollary}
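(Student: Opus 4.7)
The plan is to invert the explicit error bound from Theorem~\ref{thm: Q error bound} for $n_k$. The right-hand side of \eqref{eqn: epsilon thm} is a sum of four nonnegative terms, each strictly decreasing in $n_k$. I will require each of these four summands to be at most $\epsilon/4$ individually, obtain four candidate lower bounds on $n_k$, and then take their maximum. No further union bound over the failure probability is needed, since $\delta$ already enters symmetrically through the logarithms of Theorem~\ref{thm: Q error bound}.

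The key preliminary is to bound $\sigma_{\boldsymbol{\mu}} = \sum_{i=1}^m \sigma_{\mu_i}$. By Assumption~\ref{assumption: subset contains info} each $X_i$ has a finite variance $\sigma_{\mu_i}^2$ that is a property of the problem instance and does not depend on $N$, $\epsilon$ or $\delta$; consequently each $\sigma_{\mu_i} = \mathcal{O}(1)$ in the scaling parameters and $\sigma_{\boldsymbol{\mu}} = \mathcal{O}(m)$. Substituting this into the first summand $\gamma\beta^2 L \sigma_{\boldsymbol{\mu}}/\sqrt{n_k}$ of \eqref{eqn: epsilon thm} and requiring it to be at most $\epsilon/4$ yields $n_k = \Omega(\beta^4 L^2 m^2/\epsilon^2)$. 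Inverting the remaining three summands in turn gives $n_k = \Omega(\beta^3 \log(N/\delta)/\epsilon^2)$ from the second term, $n_k = \Omega(\beta^{8/3}\log(N/\delta)/\epsilon^{4/3})$ from the third, and $n_k = \Omega(\beta^3 \log(N/\delta)/\epsilon)$ from the fourth.

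Since $\epsilon \in (0,1)$, the two $1/\epsilon^2$ requirements dominate the $1/\epsilon^{4/3}$ and $1/\epsilon$ ones, so the asymptotic rate is governed by the maximum of $m^2$ and $\log(N/\delta)$. In case (1) with $m = \mathcal{O}(\sqrt{\log N})$, the contribution $m^2 = \mathcal{O}(\log N)$ is absorbed into $\log(N/\delta)$, producing $n_k = \mathcal{O}(\beta^4 L^2 \log(N/\delta)/\epsilon^2)$. In case (2) with $m = \mathcal{O}(\log N)$, the term $m^2 = \mathcal{O}((\log N)^2)$ is the dominating factor, yielding $n_k = \mathcal{O}(\beta^4 L^2 (\log(N/\delta))^2/\epsilon^2)$.

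The main obstacle is not the algebraic inversion itself, which is routine, but the identification that $\sigma_{\boldsymbol{\mu}}$ scales at most linearly in $m$. This relies on the compactness of the parameter space in Assumption~\ref{assumption: structural parameters} together with the finite-variance hypothesis of Assumption~\ref{assumption: subset contains info}, which keep each $\sigma_{\mu_i}$ bounded uniformly in the asymptotic parameters. Once this scaling is in hand, the proof reduces to apportioning the tolerance $\epsilon$ across the four summands and performing the two-case comparison of $m^2$ against $\log(N/\delta)$ described above.
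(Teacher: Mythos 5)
Your proposal is correct and follows essentially the same route as the paper: invert the explicit bound of Theorem~\ref{thm: Q error bound}, use $\sigma_{\boldsymbol{\mu}} = \mathcal{O}(m)$ (each $\sigma_{\mu_i} = \mathcal{O}(1)$ by Assumption~\ref{assumption: subset contains info}), and compare $m^2$ against $\log(N/\delta)$ in the two cases. The only cosmetic difference is that you apportion $\epsilon/4$ to each of the four summands separately, whereas the paper first absorbs the last three summands into a single $\big(\tfrac{C\beta^3}{n_k}\log\tfrac{N}{\delta}\big)^{1/2}$ term and then squares the resulting two-term bound.
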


\begin{proof}
From Theorem~\ref{thm: Q error bound}, the quantity $\epsilon$ in \eqref{eqn: epsilon thm} can be bounded as follows. There exists a constant $C \in \mathbb{R}$ such that
\begin{equation*}
    \epsilon 
    \leq \frac{\gamma \beta^2 L \sigma_{\boldsymbol{\mu}}}{\sqrt{n_k}} + \bigg( \frac{C \beta^3}{n_k} \log \frac{N}{\delta} \bigg)^{\frac{1}{2}}.
\end{equation*}
The above inequality implies that
\begin{multline}
    \epsilon^2
    \leq \frac{1}{n_k} \bigg( \gamma^2 \beta^4 L^2 \sigma_{\boldsymbol{\mu}}^2 
    + \gamma \beta^{3.5} L \sigma_{\boldsymbol{\mu}} \sqrt{C \log \frac{N}{\delta}} \\
    + C \beta^3 \log \frac{N}{\delta} \bigg).
    \label{eqn: epsilon square}
\end{multline}

Consider the first case where $m = \mathcal{O} ( \sqrt{\log N} )$. We have that 
\begin{equation*}
    \sigma_{\boldsymbol{\mu}} = \sum_{i=1}^m \sigma_{\mu_i} = \mathcal{O} ( \sqrt{\log N} ),
\end{equation*}
and thus, inequality \eqref{eqn: epsilon square} implies that $n_k = \mathcal{O} \big( \frac{\beta^4 L^2}{\epsilon^2} \log \frac{N}{\delta} \big)$.

Consider the second case where $m = \mathcal{O} ( \log N )$, we have that
\begin{equation*}
    \sigma_{\boldsymbol{\mu}} = \sum_{i=1}^m \sigma_{\mu_i} = \mathcal{O} ( \log N ),
\end{equation*}
and thus, inequality \eqref{eqn: epsilon square} implies that $n_k = \mathcal{O} \big( \frac{\beta^4 L^2}{\epsilon^2} \big( \log \frac{N}{\delta} \big)^2 \big)$.
\end{proof}

It is noted that the PAC sample complexity bound in Corollary~\ref{cor: no. of samples} is for $n_k$ instead of the total number of time-steps $k$. Specifically, by using the same sample generation technique (generative model) as in \cite{azar2011reinforcement}, we can compare our results with the previous classic bound on the total number of time-steps, $k = \mathcal{O}\big( \frac{N \beta^3}{\epsilon^2} \log \frac{N}{\delta} \big)$, derived in \cite{azar2011reinforcement}. Under the generative model, at each time-step $k$, each state-action pair has $\big\lfloor \frac{k}{N} \big\rfloor$ visitations. From Assumption~\ref{assumption: subset contains info}, recall that $\mathcal{U}_i$ is the subset of all state-action pairs that provides information about $\mu_i^*$. Then, the minimum amount of information for any structural parameter at time-step $k$ is 
\begin{equation*}
    n_k = \min_{i \in \{ 1, 2, \ldots, m \}} | \mathcal{U}_i | \times \bigg\lfloor \frac{k}{N} \bigg\rfloor.
\end{equation*}
From Corollary~\ref{cor: no. of samples}, we have that both of the following conditions suffice for achieving the classic bound $k = \mathcal{O}\big( \frac{N \beta^3}{\epsilon^2} \log \frac{N}{\delta} \big)$.
\begin{enumerate}
    \item $m = \mathcal{O} (\sqrt{\log N})$ and $| \mathcal{U}_i | = \Omega ( \beta L^2 )$ for all $i \in \{ 1, 2, \ldots, m \}$.
    
    \item $m = \mathcal{O}\big( \log N \big)$ and $| \mathcal{U}_i | = \Omega \big( \beta L^2 \log \frac{N}{\delta} \big)$  for all $i \in \{ 1, 2, \ldots, m \}$.
\end{enumerate}

As we will later show in the illustrative example and numerical experiments, in many practical scenarios, $|\mathcal{U}_i| = \Theta (N)$, for all $i = \{1,2,\ldots,m\}$, which therefore indicates better sample efficiency compared to the classic result.

\section{An Illustrative Example of the Structural Estimation-based Approach}
\label{section:An Application to Queuing System}

In this section, we describe a practical scenario to illustrate the structural estimation-based planning. We use a general class of queuing system, i.e., discrete-time Geo/Geo/k model (based on Kendall's notation), as an instance. 

Assume that the system consists of a stream of (identical) tasks arriving to a queue buffer, following a geometric distribution with parameter $I^*$ as the true injection rate (i.e., at each time-step, there is a probability $I^*$ that a new job arrives to the queue).  The buffer has a finite size, $B$, which is served by $G$ servers with different exit probabilities (i.e., different processing speeds), $\mu_1^*, \ldots, \mu_G^*$. The exit of task at server $i \in \{ 1, 2, \ldots, G \}$ follows a Bernoulli distribution with parameter $\mu_i^*$ (i.e., the service times follow a geometric distribution, and at each time-step, if there is a task on server $i$, that task is completed with probability $\mu_i^*$). Our aim is to minimize the discounted infinite horizon sum of the jobs remaining in the system (i.e., in both queue and servers) over all time steps.

This system can be viewed as a MDP $M= \langle \mathcal{S},\mathcal{A},P_M,R,\gamma \rangle$. Specifically, the state of the system at time-step $k$ can be defined as a vector $(l_k, s_{k,1}, \ldots, s_{k,G}) \in \mathbb{R}^{G+1}$ where $l_k \in \{ 0, 1, \ldots, B \}$ indicates the number of tasks remaining in the queue, and the binary variable $s_{k,i}$ for $i \in \{ 1, \ldots, G \}$, indicates the operation status of server $i$, i.e.,  $s_{k,i}=1$ indicates that there is a task being processed by server $i$, while $s_{k,i}=0$ indicates that the server $i$ is available for task assignment. The action at time-step $k$ can be defined as a binary vector $(a_{k,1},\ldots,a_{k,G})\in \mathbb{R}^G$, where for $i \in \{ 1, \ldots, G \}$, $a_{k,i}=1$ indicates the assignment of task to server $i$, and $a_{k,i}=0$ otherwise. The reward function can be chosen as the negative of the total number of jobs in the system, i.e., $r_k=-(l_k+\sum_{i=1}^G s_{k,i})$. It is noted that $|\mathcal{S}|= (B+1) 2^{G}$ and $|\mathcal{A}|=2^{G}$. Thus, we have $N=|\mathcal{S}||\mathcal{A}|=(B+1)2^{2G}$.

Suppose we do not know the true values of the arrival and service rates $I^*, \mu_1^*, \ldots, \mu_G^*$ a priori; however, given that we know this is a queuing system, the structure of the system is known (i.e., each element of the probability transition matrix is a polynomial in those parameters for this given system, where the highest degree of any polynomial is $G+1$).  If one treated the system as a black box (i.e., with no structural knowledge), there would be the order of $|\mathcal{S}|^2|\mathcal{A}|$ elements to estimate (i.e., each element of the transition matrix). However, by leveraging structural information, there are only $m=G+1$ parameters to estimate, from which we can derive the estimated transition matrix $P_{k}$ based on $k$ data samples. Note that $m = \frac{1}{2} \log_2 \big( \frac{N}{B+1} \big) + 1 = \mathcal{O} (\log N)$. This corresponds to the second case in Corollary~\ref{cor: no. of samples}.

At each time-step $k$, we implement MLE for the unbiased estimation of the injection rate and exit probabilities, i.e., $\hat{I}_{k}=\frac{1}{n_{k,I}} \sum_{t \in T_{k,I}} Y_t$, ${{\mu}}_{i,k}=\frac{1}{n_{k,i}} \sum_{t \in T_{k,i}} X_{t,i}$, where $Y_t \sim {\rm Bern} \{I^*\}$ and $X_{i,t} \sim {\rm Bern} \{\mu_i^*\}$ are random variables for the number of tasks remaining in the queue and $i$-th server's operation status, and $T_{k,I}$, $T_{k,i}$ are the corresponding sets of time-steps.

The first example shows how we extract structural information from the collected samples and explains how to determine $\mathcal{U}_i$, $i \in \{ 1, 2, 3 \}$, as specified in Assumption~\ref{assumption: subset contains info}.
\begin{example} \label{example: binary rv}
At time-step $k$, suppose we collect a sample $(s_k,a_k,s_{k+1})$, where $s_k=(l_k=3,s_{k,1}=0,s_{k,2}=0,s_{k,3}=0)$, $a_k=(a_{k,1}=0,a_{k,2}=1,a_{k,3}=1)$, and $s_{k+1}=(l_k=2,s_{k,1}=0,s_{k,2}=0,s_{k,3}=1)$. Then, we can derive that $Y_k = 1$, i.e., a job arrival happens at time $k$, and $n_{k,I}$ increases by 1; $X_{k,2}=1$, i.e., the job departs from server 2, and $X_{k,3}=0$, i.e., the job fails to depart from server 3. Both of $n_{k,2}$ and $n_{k,3}$ increase by 1. $ \square $
\end{example} 

Let $\mathcal{U}_0$ be the subset corresponding to $I^*$, and $\mathcal{U}_i$ be the subset corresponding to $\mu_i^*$ for $i \in \{ 1, 2, 3 \}$. Given state-action pair $(s_k, a_k)$ as the sample discussed in Example~\ref{example: binary rv}, no matter which next state $s_{k+1}$ is, the corresponding transitions provide the information of all the structures except $\mu_1$. In general, $\mathcal{U}_0$ contains all of the state-action pairs except for the cases when the queue is full, i.e., $l_k = B$. On the other hand, $\mathcal{U}_i$ contains the state-action pairs where either $s_{k,i} = 1$, or $s_{k,i} = 0$ and $a_{k,i} = 1$. Thus, we have $|\mathcal{U}_j| = \Theta (|\mathcal{S}||\mathcal{A}|) = \Theta (N)$ for all $j \in \{ 0, 1, 2, 3 \}$, which implies better sample efficiency in terms of the classic result, as discussed in Section~\ref{section:Estimation via Structural Parameters and Evaluation Error Analysis}.

The second example will show how each entry of the transition matrix can be expressed as a polynomial of the structural parameters and entries of a given policy vector, which therefore satisfies the Assumption~\ref{assumption: structural parameters}.

\begin{example}
Consider a current state $S = (l=1, s_1=0, s_2=0, s_3=0)$, an action $A = (a_1=1, a_2=0, a_3=0)$, and a next state $S' = (l=0, s_1=0, s_2=0, s_3=0)$. We have that $P(S' | S, A) = (1 - I^*)\mu_1^*$, which indicates the probability that two events, i.e., a job does not arrive, and a job departs from server 1, happen at the same time-step. $ \square $
\end{example}

\section{Numerical Experiments}

In this section, we implement two numerical experiments in two different environments. In these experiments, we consider a two-stage framework: the online exploration stage and the offline optimization stage. At the online exploration stage, we aim to collect samples for estimating the transition probability matrix. More specifically, we first derive the estimates of structural parameters, based on which we estimate the $P$ matrix. At the offline optimization stage, we adopt the policy iteration (PI) algorithm. For comparison, we also consider the entry-wise estimation-based approach and a model-free approach (Q-learning). The entry-wise estimation-based approach belongs to the model-based method where we apply maximum likelihood estimation (MLE) to estimate $P$ matrix in the entry-wise fashion directly from the collected samples. It is noted that the entry-wise approach can be viewed as a special case of structure estimation-based approach in the sense that each entry is treated as a structural parameter. In particular, for the entry-wise case, from \eqref{def: min_samples}, we have 
\begin{equation*}
    n_k = \min_{(s,a) \in \mathcal{S} \times \mathcal{A}} n_{k}(s,a), 
\end{equation*} 
where $n_{k}(s,a)$ denotes the number of tuples $(s,a)$ within $\mathcal{D}_k$.

Following the mindset of theoretical analysis in section~\ref{section:Estimation via Structural Parameters and Evaluation Error Analysis}, we mainly focus on sample efficiency of the Q estimation error ($\| Q^* - Q_k^* \|$), and the ratio of the minimum amount of information $n_k$ to the total sample budget $k$.

\subsection{Queuing network}

For the queuing network environment, we consider a discrete-time Geo/Geo/3 queuing model, where the queue length is $B=8$, the injection rate is $I^*=0.85$, and the exit probabilities for the three servers are $\mu_1^*=0.9$, $\mu_2^*=0.01$, and $\mu_3^*=0.04$, respectively. For the corresponding MDP, we set $\gamma=0.9$, and have $|\mathcal{S}|=72$, $|\mathcal{A}|=8$.

The results are shown in Figures~\ref{figure: evaluation error queue} and \ref{figure: valid sample queue}. From Figure~\ref{figure: evaluation error queue}, we observe that the structure-based approach needs only $2 \times 10^4$ samples for convergence to $Q^*$, while the entry-wise approach requires more than $3.5 \times 10^7$ samples. Figure~\ref{figure: valid sample queue} indicates that, in the structural case, almost every collected sample can provide structural information since the ratio is close to $1$. In contrast, in the entry-wise case, due to the large amount of $P$ matrix entries (i.e., $|\mathcal{S}|^2 |\mathcal{A}|$), the minimum amount of information increases very slowly with the total number of samples. This shows the potential of leveraging structural information to dramatically increase sample efficiency.

\begin{figure}[htbp]
\centering
\begin{minipage}[t]{0.47\textwidth}
\centering
\includegraphics[width=7.0cm]{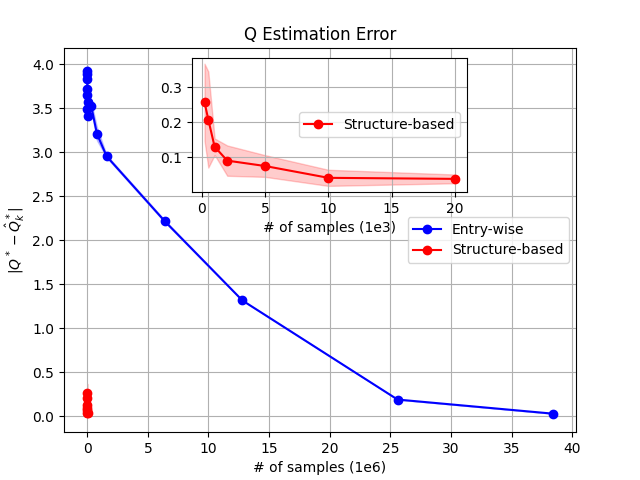}
\caption{Q error in queuing network}
\label{figure: evaluation error queue}
\end{minipage}
\begin{minipage}[t]{0.47\textwidth}
\centering
\includegraphics[width=7.0cm]{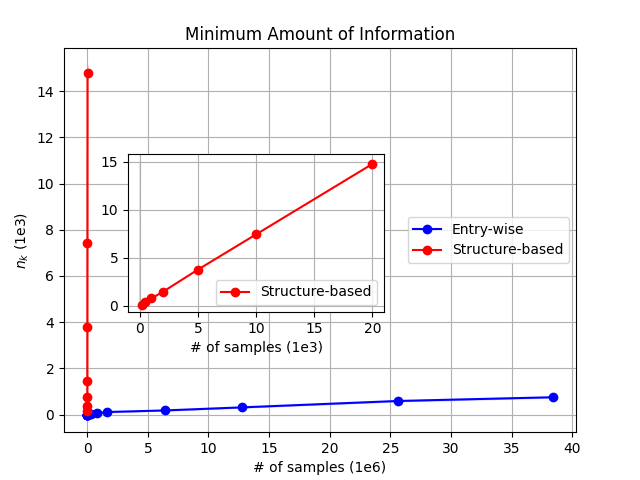}
\caption{Minimum info in queuing network}
\label{figure: valid sample queue}
\end{minipage}
\end{figure}

\subsection{Stochastic windy gridworld}  \label{subsec: stochastic windy}

Windy gridworld, as shown in Figure~\ref{figure: Stochastic windy gridworld}, is a classical environment first used in \cite[Example~6.5]{sutton2018reinforcement}. It is a standard gridworld with a crosswind running upward through the grids of certain columns. The strength of the wind (as denoted below each column) would make the resultant next state be shifted upward by the corresponding distance (e.g., wind strength 1 would result in one grid shifted upward). The stochastic windy gridworld we consider is a complicated variant of the classical version where the wind and state transitions have stochasticity, i.e., with a probability of $\beta_i$, the wind of $i$-th column would affect the dynamics of the agent according to its strength, and with a probability of $\alpha$, the agent will transit to one of the surrounding states according to an uniform distribution independent of agent's action. This environment is originated from \cite{de2018multi} and \cite{dumke2017double}. The state of the system is a 2-dimensional vector $(x,y)$ using the agent's location in terms of x-axis and y-axis (ranging from $0 \sim 9$ and $0 \sim 6$ respectively). At each time-step, an action is chosen from $(0,1)$ (up), $(0,-1)$ (down), $(-1,0)$ (left), and $(0,1)$ (right). Thus, we have $|\mathcal{S}| = 70$ and $|\mathcal{A}| = 4$. For other parameters of the environment, we set discount factor $\gamma=0.9$, $\alpha=0.4$, and uniform stochasticity parameter for all winds, $\beta_0 = \beta_1 = \cdots = \beta_9 = 0.5$.

It is noted that the agent is blocked by the borderline, e.g., implementing either action $(-1,0)$ or $(0,-1)$ at state $(0,0)$ (the grid at bottom left) would not result in any movement. There are constant rewards of $-1$ until the goal state is reached. The blue line in Figure \ref{figure: Stochastic windy gridworld} indicates an optimal trajectory for the \textit{deterministic version} given the start state, $S$ (i.e., state $(0,3)$), and goal state, $G$ (i.e., state $(7,3)$).

Satisfying Assumption~\ref{assumption: subset contains info}, the structural parameters, in this case, consist of the 10 probabilities of each crosswind and the probability of taking uniformly random action. These are associated with 11 random variables (i.e., $X_0, X_1, \ldots, X_{10}$) following Bernoulli distribution with parameters $\beta_0$, $\beta_1$, $\ldots$, $\beta_{9}$ and $\alpha$, respectively.\footnote{Although the strength of wind is initially unknown to us and determines the dynamics of the system, we do not treat it as a structural parameter here since it is deterministic and could be easily inferred using single sample (transition).} Specifically, for $X_3$, i.e., the probability of the crosswind in the 3-rd column (i.e., $x=3$), its corresponding subset $\mathcal{U}_3$ contains the state-action pairs satisfying either (1) $(x=3, 0 \leq y \leq 4)$, (2) $(x=3, y=5)$, $a \in \{(-1,0),(1,0),(0,-1)\}$, or (3) $(x=3, y=6)$, $a \in \{(0,-1)\}$. In general, the information of structure $\beta_i$ could only be provided by the state-action pairs where the states belong to its own column (i.e., $i$-th column). For $X_{10}$, i.e., the probability of taking uniformly random action, its corresponding subset $\mathcal{U}_{10}$ contains all state-action pairs. Thus, we also have $|\mathcal{U}_i| = \Theta (|\mathcal{S}||\mathcal{A}|) = \Theta (N)$, which implies better sample efficiency in terms of the classic result, as discussed in Section~\ref{section:Estimation via Structural Parameters and Evaluation Error Analysis}.

Assuming that we are able to increase the size of state space by increasing the number of columns but the set of possible wind strengths and associated stochasticity is fixed, i.e., it does not depend on the number of columns, in this case, we have that $m = \mathcal{O}(1)$. This corresponds to the first case in Corollary~\ref{cor: no. of samples}.

\begin{figure}[thpb]
  \centering
  \includegraphics[scale=0.35]{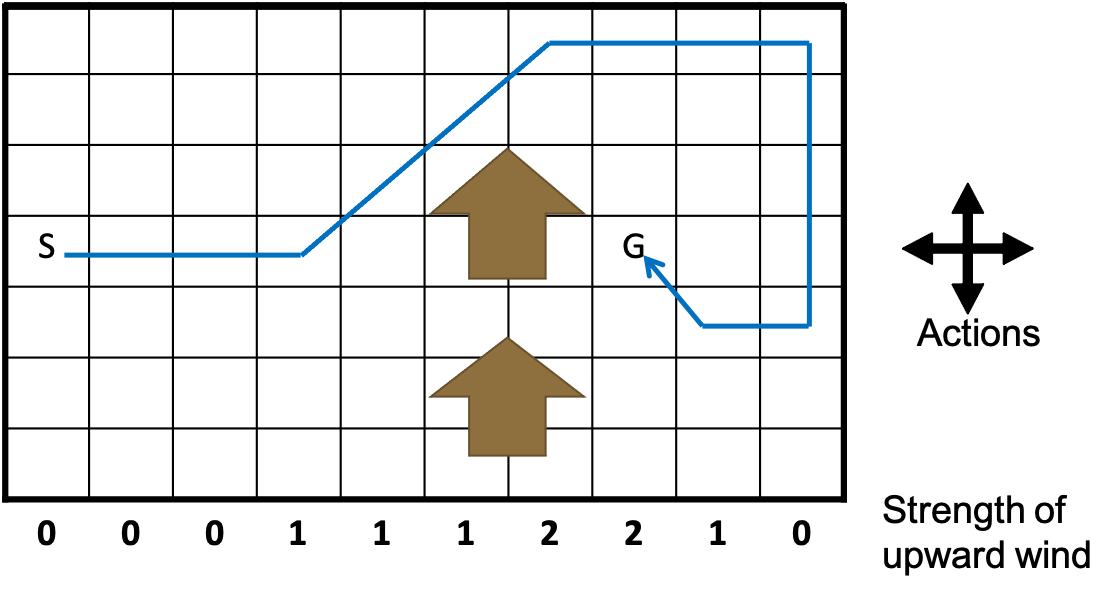}
  \caption{Stochastic windy gridworld}
  \label{figure: Stochastic windy gridworld}
\end{figure}

We investigate both entry-wise estimation (black box) and structural estimation (grey box) for model-based method. For the structural case, we further consider the different levels of structural information being used for reconstructing the $P$ matrix. Specifically, we consider two cases: the case of owning more structural information (``More Info'') and least structural information (``Lst Info'').  In the case of ``More Info'', we know the specific columns (i.e., 3-rd, 4-th, 5-th and 8-th columns) with the same wind strength and associated probabilities, and the columns (0-th, 1-st, 2-nd, and 9-th columns) that are not affected by the wind. Thus, only three structural parameters are needed. 

In the case of ``Lst Info'', we only know that each column might be affected by the stochastic upward wind. In this case, 11 structural parameters are needed. Also, we implement a model-free method, Q-learning, for comparison.

The results for model-based approaches are shown in Figures~\ref{figure: Evaluation_error_SW} and~\ref{figure: valid sample in windy gridworld}. In order to achieve sufficiently small Q estimation error (e.g., $\| Q^* - Q_k^* \| < 0.01$), the entry-wise case needs $2 \times 10^7$ samples, while for structural cases, ``Lst Info'' needs $2 \times 10^4$, and ``More Info'' needs only $2.5 \times 10^3$ samples. This shows the power of structural information in terms of sample efficiency, which can be reflected from the relationship between $n_k$ and $k$ in Figure~\ref{figure: valid sample in windy gridworld} as well. The values of ${n_k}/{k}$ are approximately $4\%$, $1.1\%$, $0.28\%$ on average for ``More Info'', ``Lst Info'' and entry-wise cases, respectively.


\begin{figure}[thpb]
  \centering
  \includegraphics[scale=0.5]{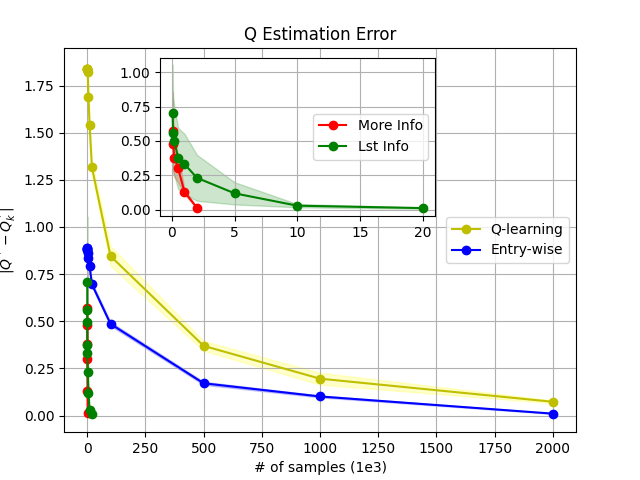}
  \caption{Q error in gridworld}
  \label{figure: Evaluation_error_SW}
\end{figure}

\begin{figure}[thpb]
  \centering
  \includegraphics[scale=0.5]{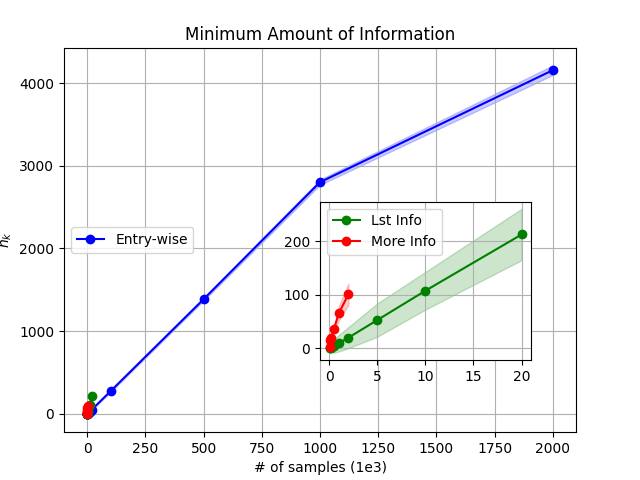}
  \caption{Minimum Info in gridworld}
  \label{figure: valid sample in windy gridworld}
\end{figure}

The results of the model-free Q-learning method for comparison are also shown in Figure~\ref{figure: Evaluation_error_SW}. According to the results, we find that the model-free method shows similar performance with entry-wise approach. This shows that the model-based approach may not always be more sample efficient than a model-free method especially when there are many structures to be estimated.

\section{Conclusion}

In this paper, we consider leveraging the power of structural knowledge about transition probability matrix in a model-based reinforcement learning problem. Specifically, we identify the explicit relationships between structural parameter and model estimation error, model estimation and evaluation error. In terms of the sample efficiency, we not only provide the theoretical results on PAC sample complexity bound on the action-value function, but also we empirically show the advantage of leveraging structure information.

Our proposed structure estimation-based approach is general and could be applied to various problems. Besides, our approach could be further applied to the case where structural parameters are time-varying. For example, the wind strength in stochastic windy gridworld varies along with time, i.e., the underlying MDP is non-stationary, which is one of our future steps from an experimental perspective.

The major limitation of the current approach comes from the heavy dependence on specific knowledge of system dynamics (e.g., the wind dynamics in stochastic windy GridWorld). In a practical scenario, it is highly non-trivial to discover the structural parameters for large-scale problems in an automatic and general way. Although, there exists some research (e.g., HiP-MDPs, \cite{doshi2016hidden}, \cite{perez2020generalized}, \cite{killian2017robust}) working towards this topic, the generated structural properties and mapping functions would not satisfy assumptions we made and thus cannot guarantee convergence and finite-time performance of the algorithms. Having said that, we can still leverage specific knowledge of a given system dynamics, such as model approximation methods in queuing theory, to reduce the dimensionality of the model and allow the generation of small-size structural parameters.





\bibliographystyle{IEEEtran}
\bibliography{Final_arXiv_version}

\end{document}